\newtheorem{theorem}{Theorem}
\def\BibTeX{{\rm B\kern-.05em{\sc i\kern-.025em b}\kern-.08em
    T\kern-.1667em\lower.7ex\hbox{E}\kern-.125emX}}
\begin{document}

\title{End-to-End Stable Imitation Learning via Autonomous Neural Dynamic Policies}

\author{Dionis~Totsila$^{1\dagger *}$,
        Konstantinos~Chatzilygeroudis$^{2\dagger *}$,
        Denis~Hadjivelichkov$^3$,
        Valerio~Modugno$^3$,\\
        Ioannis~Hatzilygeroudis$^1$,
        and~Dimitrios~Kanoulas$^3$
\thanks{$^*$Corresponding~authors:~{\tt\small dtotsila@upnet.gr \newline costashatz@upatras.gr }}
\thanks{$^\dagger$ Equal Contribution}
\thanks{$^1$\textit{Computer Engineering and Informatics Department (CEID), University of Patras, Greece}}
\thanks{$^2$\textit{CILab, Department of Mathematics, University of Patras, Greece}}
\thanks{$^3$\textit{RPL Lab, University College London, United Kingdom}}
}
\maketitle

\begin{abstract}
State-of-the-art sensorimotor learning algorithms offer policies that can often produce unstable behaviors, damaging the robot and/or the environment. Traditional robot learning, on the contrary, relies on dynamical system-based policies that can be analyzed for stability/safety. Such policies, however, are neither flexible nor generic and usually work only  with proprioceptive sensor states. In this work, we bridge the gap between generic neural network policies and dynamical system-based policies, and we introduce Autonomous Neural Dynamic Policies (ANDPs) that: (a) are based on autonomous dynamical systems, (b) always produce asymptotically stable behaviors, and (c) are more flexible than traditional stable dynamical system-based policies. ANDPs are fully differentiable, flexible generic-policies that can be used in imitation learning setups while ensuring asymptotic stability. In this paper, we explore the flexibility and capacity of ANDPs in several imitation learning tasks including experiments with image observations. The results show that ANDPs combine the benefits of both neural network-based and dynamical system-based methods.
\end{abstract}

\begin{IEEEkeywords}
Robot Learning, Imitation Learning, Dynamical System-Based Policies, Lyapunov Stability, Data-Efficient Learning
\end{IEEEkeywords}

\section{Introduction}

Choosing the appropriate policy structure is crucial for effective and practical robot learning~\cite{varin2019comparison,chatzilygeroudis2019survey,martin2019variable}. Currently, either in the context of Reinforcement Learning (RL)~\cite{sutton1998reinforcement} or Imitation Learning (IL)~\cite{Rajeswaran2018RSS}, the standard choice is to use Neural Networks (NNs). NNs possess the flexibility needed to learn complicated behaviors as well as the generalizability required to be able to run the same algorithms in any robot or scenario.  However, NN-based policies are black-box and cannot ensure well-behaved trajectories, meaning that their behavior cannot be predicted in unforeseen situations. As a result, RL algorithms often make harmful decisions for the robot and/or  the environment, especially during initial learning stages.

This comes in contrast with traditional robot learning literature~\cite{billard2008robot}, where usually in the context of IL/Learning from Demonstrations (LfD) the policy behavior is shaped according to some well-defined criteria. For example, producing behaviors that are guaranteed to asymptotically converge to an attractor is an important concept of traditional robot learning~\cite{billard2008robot,khansari2011learning}. The main building tool for this is a Dynamical System (DS), and the main idea is to represent the policy to be learned as a DS. This gives us the ability to reason about the policy in terms familiar to control theory and make proofs about properties that we care about (e.g., for asymptotic stability)~\cite{khansari2011learning}. This concept has been explored in robotic scenarios with two main different approaches: (a) time-dependent DSs that mostly fall under the framework of Dynamic Movement Primitives (DMPs)~\cite{stulp2013robot,schaal2006dynamic,ijspeert2013dynamical}, and (b) autonomous DSs where the input is only dependent on the current state~\cite{khadivar2021learningdswithbigurcations, khansari2011learning}. Both approaches are able to provide asymptotic stability guarantees, while autonomous DSs' reactiveness does not depend on time. Traditionally, both approaches have been utilized mainly in Imitation Learning (IL)/Learning from Demonstrations (LfD) scenarios~\cite{ude2010task,ude2016trajectory,ijspeert2003learning,figueroa2018physically,khansari2011learning,amanhoud2019dynamical}, but recently there was an attempt to use DMPs with RL~\cite{bahl2020neural,bahl2021hierarchical}.
\begin{figure}[!t]
    \centerline{\includegraphics[width=\linewidth]{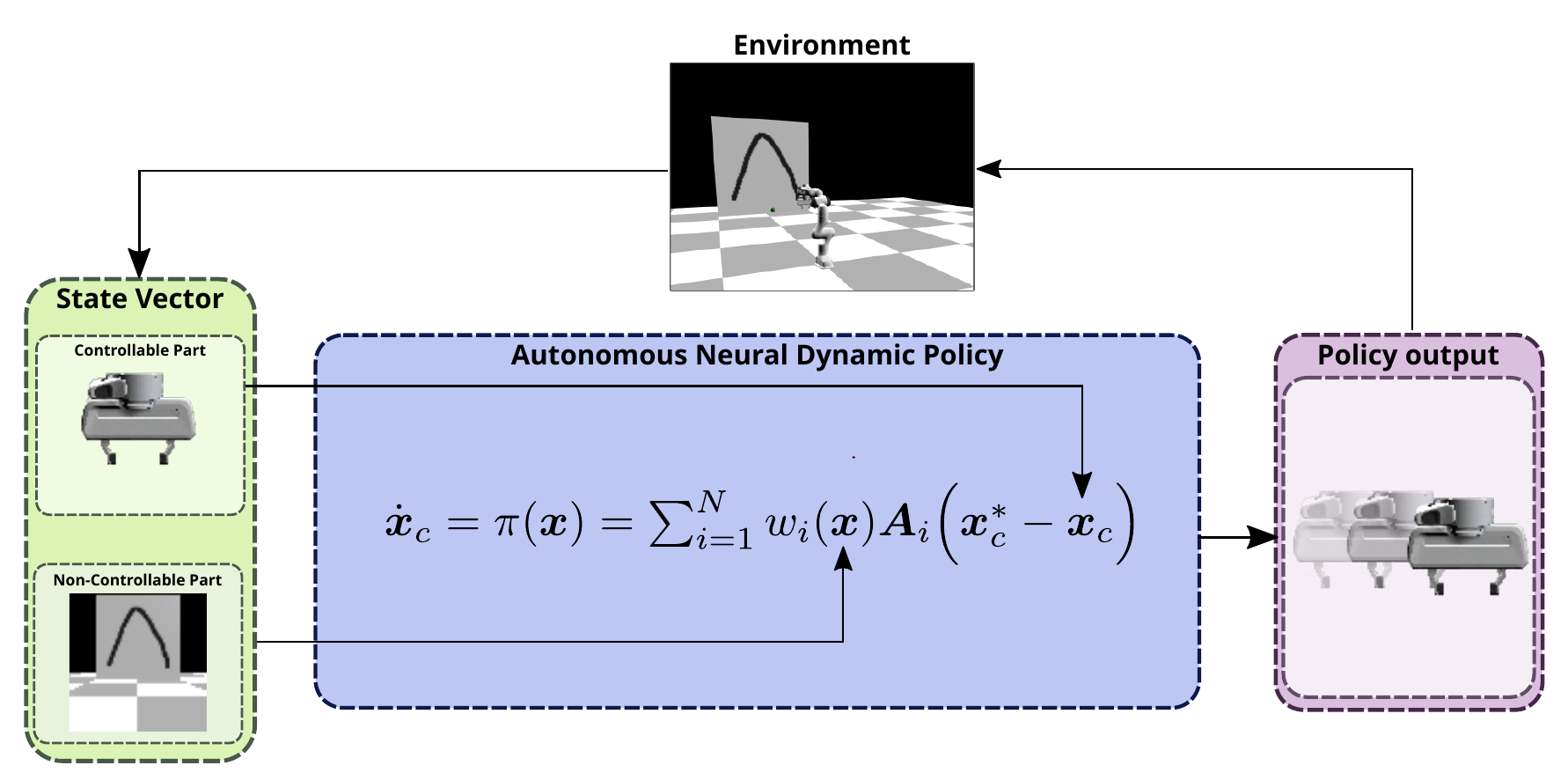}}
    \caption{Autonomous Neural Dynamic Policy Outline.}
    \label{fig:policy_overview}
    \vspace{-2em}
\end{figure}

In this work, we aim at developing a novel policy representation that:
\begin{enumerate}
    \setlength\itemsep{0em}
    \item Produces behaviors that are guaranteed to asymptotically converge to an attractor;
    \item Is universal, meaning that any robot/embodied agent can use it;
    \item Can accept any observation space (e.g. RGB images), and the input is not limited only to the feedback of robot states.
\end{enumerate}

The first feature guarantees that the policy does not produce unpredictable behaviors outside the trained data, but will try to go toward the attractor, which is a critical aspect for robotic applications. Even if this feature cannot guarantee that the robot will never try anything harmful (e.g., very big velocity command), it is a good stepping stone towards having safe robot behaviors. The second characteristic is important as it will make the policy usable by any robotic mechanism and any scenario. The last characteristic is important for practical robotic applications since traditionally policies that provide stability guarantees are not straightforward to utilize with states that are not limited to proprioceptive sensor states.

    

Inspired by the LfD literature and the usage of DSs, we attempt to integrate them with the representation capabilities of NNs, with the intention of creating a policy with the aforementioned properties. To achieve this goal:

\begin{itemize}
    \setlength\itemsep{0em}
    \item We assume that the state of the system can be divided into a part that is directly controlled (e.g., joint positions), and a part that can only be observed and/or indirectly controlled (e.g., a box);
    \item We represent the policy as a non-linear combination of linear systems; this gives us the ability to reason about the stability of the produced behavior, while also being able to exploit the representation abilities of NNs to combine the elementary DSs effectively.
\end{itemize}
We call this new type of policies Autonomous Neural Dynamic Policies (ANDPs) because they: (a) are based on dynamical systems, (b) are generic policies, (c) are based on neural networks, and (d) do not depend on the time (thus, autonomous). The main novelty of ANDPs lies in their ability to accept any input space (even images), while still producing asymptotically stable behaviors of the controllable state. This is achieved by imposing constraints on the elementary linear DSs, while making the policy expressive by using NNs to combine them. Finally, we perform some reparameterizations to ``eliminate'' the constraints and to be able to optimize ANDPs using gradient-based unconstrained optimization.
\section{Related Work}
%
%
The choice of the policy structure plays an important role for the effectiveness of learning in practical robot applications. When designing the policy structure, there is always a tradeoff between having a representation that is expressive, and one that provides a space that is efficiently searchable~\cite{chatzilygeroudis2019survey}.

The most obvious way to make the policy easy to be searched (or optimized) is to hand-design it. In~\cite{fidelman2004learning}, for example, the authors design by hand a policy for a ball acquisition task, which has only four parameters. This low-dimensional policy can be easily optimized, but with only four parameters it might not possess big expressiveness. Moreover, if we are faced with a different robot/task, we need to re-design the policy from scratch. On the other hand, using a function approximator (e.g. a neural network) to describe the policy, enables us to easily increase the expressiveness and generality of the policy, but can make the policy difficult to optimize for.

There are numerous works that describe best practices and policy structures that ease the learning process. In~\cite{martin2019variable} the authors thoroughly evaluate different action spaces (with the corresponding low-level controllers, and thus policy structures) in a wide range of scenarios. They conclude that operating in end-effector space combined with low-level controllers make the learning process faster, more robust, and provide easier transfer from simulation to the physical world and between robots. The authors in~\cite{varin2019comparison} reach to similar conclusions in a related study that performs a comparison between different action spaces for manipulation tasks. Overall, it is clear from the literature that in order to learn effectively in practical robotic applications, we need a structured policy representation.

Dynamic Movement Primitives (DMPs)~\cite{stulp2013robot,schaal2006dynamic,ijspeert2013dynamical} provide a framework for structured policy types that are basically a dynamical system. As such, we can insert desired properties that can make our system behave in specific ways. DMPs are split into two systems: (a) the canonical system (which is usually a springer-damper system), and (b) the transformation system. The canonical system represents the movement \emph{phase} $s$, which starts at 1 and converges to 0 over time. The transformation systems combine a spring-damper system with a function approximator (e.g., NNs) which, when integrated, generates accelerations. Multi-dimensional DMPs are achieved by coupling multiple transformation systems with one canonical system. DMPs can be used in the end-effector and the joint angle space. There are numerous successful implementations of DMPs mainly in IL/LfD scenarios covering a wide range of tasks~\cite{ijspeert2013dynamical,ijspeert2003learning,ude2016trajectory} and even multi-task cases~\cite{ude2010task,stulp13learning}.

DMPs, however, are time-dependent and thus they may produce undesirable behaviors; for example, a policy that cannot adapt to perturbations after some time. Stable Estimator of Dynamical Systems (SEDS)~\cite{khansari2011learning} explores how to use dynamical systems in order to define autonomous (i.e., time-independent) controllers (or policies) that are asymptotically stable. The main idea of the algorithm is to use a finite mixture of Gaussian functions (Gaussian Mixture Models - GMMs) as the policy, $\dot{\boldsymbol{\xi}} = \pi_{\text{seds}}(\boldsymbol{\xi})$, with specific properties that satisfy some stability guarantees. SEDS, however, requires demonstrated data in order to optimize the policy (i.e., data gathered from experts), although similar ideas have been used within the RL framework~\cite{guenter2007reinforcement}. SEDS and its variants~\cite{figueroa2018physically,shavit2018learning} have provided effective solutions to difficult tasks ranging from point-to-point motions~\cite{khansari2011learning} to humanoid navigation~\cite{figueroa2020dynamical} and following force profiles~\cite{amanhoud2019dynamical}. One of the main limitations of SEDS is the \emph{accuracy vs. stability dilemma}, i.e., it performs poorly in highly non-linear motions that contain high curvatures or that are non-monotonic. This is mainly because of the constraints SEDS imposes on the structure of the GMMs. Recent variants of SEDS, and in particular LPV-DS~\cite{figueroa2018physically,shavit2018learning}, attempt to relax the constraints by disconnecting the learning of the weighting function from the elementary DSs; LPV-DS still uses GMMs for learning the functions.

Recently, Bahl et al.~\cite{bahl2020neural} proposed a method to combine neural networks with DMPs. The main idea is to create a high-level controller with NNs that takes as input an unstructured state and \emph{selects} parameters of a DMP that acts as the low-level controller. Their method, called Neural Dynamic Policies (NDPs), was able to learn multiple LfD and RL scenarios effectively. In a recent extension~\cite{bahl2021hierarchical}, the authors provide a hierarchical formulation of their method that can be used to solve more complex tasks. To the best of our knowledge, this work proposes one of the first methods that effectively combine NNs with DSs and provide the first general-purpose policy (i.e., it can be used with almost any input and any robot) that is based on DSs. However, since their policy changes the dynamical system every $X$ steps there are still no theoretical guarantees for stability, but mostly rely on the data-driven capabilities of the NNs to capture this type of behaviors.

In this paper, we take inspiration from LPV-DS and NDPs and provide a policy structure, called Autonomous Neural Dynamic Policies (ANDPs), that (a) always produces asymptotically stable behaviors for the controllable part of the state, and that (b) is a general purpose policy that can work with any action space and can accept arbitrary inputs (e.g., images).

\section{Problem formulation}
\label{sec:problem_formulation}
\noindent We assume discrete-time dynamical systems that can be described by transition dynamics of the form:
\begin{align}
  \label{eq:dyn}
  \boldsymbol{x}^{t+1} = f(\boldsymbol{x}^t,\boldsymbol{u}^t) + \boldsymbol{w}
\end{align}
where the system is at state $\boldsymbol{x}^t\in\mathbb{R}^E$ at time $t$, takes control input $\boldsymbol{u}^t\in\mathbb{R}^U$ and ends up at state $\boldsymbol{x}^{t+1}$ at time $t+1$, $\boldsymbol{w}$ is i.i.d. Gaussian system noise, and $f$ is a function that describes the unknown transition dynamics.

We assume that the system is controlled through a parameterized \textit{policy} $\boldsymbol{u} = \pi(\boldsymbol{x}|\boldsymbol{\theta})$ that is followed for $M$ steps ($\boldsymbol{\theta}$ are the parameters of the policy). When following a particular policy for $M$ time-steps from an initial state distribution $p(\boldsymbol{x}^0)$, the system's states and actions jointly form \textit{trajectories} $\boldsymbol{\tau} = (\boldsymbol{x}^0,\boldsymbol{u}^0,\boldsymbol{x}^1,\boldsymbol{u}^1,\dots,\boldsymbol{x}^{M-1})$, which are often also called \textit{rollouts} or \textit{paths}.

In this work, we define a novel policy structure and learning procedure (called ANDPs) with stability guarantees. 
In an imitation learning scenario, we assume access to a few demonstrated trajectories $\{\boldsymbol{\tau}_i\}_{i=1,\dots,K}$, and we want to find the policy parameterization $\boldsymbol{\theta}$ that ``mimics'' the demonstrated trajectories as well as possible. In this work, we assume having access only to the states of the system, $\boldsymbol{x}^t$, and not of the control signals, $\boldsymbol{u}^t$. In other words, we have trajectories $\{\boldsymbol{s}_i\}_{i=1,\dots,K}$ of the form $\boldsymbol{s} = (\boldsymbol{x}^0,\boldsymbol{x}^1,\dots,\boldsymbol{x}^{M-1})$. This makes the problem slightly more difficult and usually enforces the use of a low-level controller ~\cite{osa2018algorithmic}.

\section{Proposed Policy Structure}
\label{sec:andps}
We make the assumption that the state of the system can be split into two parts: (a) a part that can be directly controlled (e.g., positions and velocities of the end-effector), and (b) a part that can only be observed and/or indirectly controlled (e.g., obstacles/objects). In particular (we omit the time notation, $t$, for clarity):
\begin{align}
\label{eq:state_vec}
  \boldsymbol{x} = \begin{bmatrix} \boldsymbol{x}_c\\\boldsymbol{x}_{nc}\end{bmatrix}\in\mathbb{R}^{d_c+d_{nc}},
\end{align}
where $\boldsymbol{x}_c$ is the part of the state that can be directly controlled and $\boldsymbol{x}_{nc}$ is the part of the state that can only be observed. $d_c$ and $d_{nc}$ are the state-space dimensions for the controllable and non-controllable parts, respectively ($d_c + d_{nc} = E$).

We define the control policy as a dynamical system with a fixed attractor $\boldsymbol{x}_c^*$ (formulated as a weighted sum of elementary linear dynamical systems):
\begin{equation}
  \label{eq:policy_dyn}
  \dot{\boldsymbol{x}}_c = \pi(\boldsymbol{x}) = 
  \sum_{i=1}^Nw_i(\boldsymbol{x})\boldsymbol{A}_i \Big(\boldsymbol{x}_c^* - \boldsymbol{x}_c\Big)
\end{equation}
where
$N$ is the number of elementary dynamical systems, $w_i(\boldsymbol{x})\in\mathbb{R}$ are state-dependent weighting functions, and
$\boldsymbol{A}_i\in\mathbb{R}^{d_c\times d_c}$, $\boldsymbol{x}_c^*\in\mathbb{R}^{d_c}$.

The control policy, $\pi(\boldsymbol{x})$ (Fig.~\ref{fig:policy_overview}), defines the desired velocity profile that the controllable state $\boldsymbol{x}_c$ should follow. Depending on the state representation one can directly use the output for commanding the robot, use a PD controller, or use some inverse dynamics/kinematics model. Note, that the controllable state $\boldsymbol{x}_c$ can also contain velocities (e.g., $\boldsymbol{x}_c = \{\boldsymbol{\xi}, \dot{\boldsymbol{\xi}}\}$, where $\boldsymbol{\xi}$ is the end-effector translation) and in that case the system is a second order DS. Although in this work we explore first-order DSs, our formulation allows for second-order DS systems.
\begin{theorem}
\label{eq:ds_theorem}
Assume that the controllable part of a state trajectory follows the policy as defined in Eq.~\ref{eq:policy_dyn}. Then, the function described by Eq.~\ref{eq:policy_dyn} is asymptotically stable to $\boldsymbol{x}_c^*$ if
\begin{equation}
    \label{eq:precond}
    \begin{cases}
      \boldsymbol{A}_i + \boldsymbol{A}_i^T \succ 0 & \text{the symmetric part of } A \text{ is psd}\\
      w_i(\boldsymbol{x}) > 0, & i=1,..,N, \forall\boldsymbol{x}\in\mathbb{R}^{E}
    \end{cases}       
\end{equation}
\end{theorem}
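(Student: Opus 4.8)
The plan is to apply Lyapunov's direct method with the natural quadratic energy built around the attractor. First I would introduce the error coordinate $\boldsymbol{e} = \boldsymbol{x}_c^* - \boldsymbol{x}_c$ and the candidate
\begin{equation}
  V(\boldsymbol{x}_c) = \tfrac{1}{2}\,(\boldsymbol{x}_c^* - \boldsymbol{x}_c)^T(\boldsymbol{x}_c^* - \boldsymbol{x}_c) = \tfrac{1}{2}\|\boldsymbol{e}\|^2 .
\end{equation}
This $V$ is continuously differentiable, vanishes exactly at $\boldsymbol{x}_c = \boldsymbol{x}_c^*$, is strictly positive elsewhere, and is radially unbounded, so it is a valid global Lyapunov candidate for the equilibrium $\boldsymbol{x}_c^*$.

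Next I would differentiate $V$ along the closed-loop flow of Eq.~\ref{eq:policy_dyn}. Since $\boldsymbol{x}_c^*$ is constant, $\dot{\boldsymbol{e}} = -\dot{\boldsymbol{x}}_c = -\sum_{i=1}^N w_i(\boldsymbol{x})\,\boldsymbol{A}_i\,\boldsymbol{e}$, and therefore
\begin{equation}
  \dot{V} = \boldsymbol{e}^T\dot{\boldsymbol{e}} = -\sum_{i=1}^N w_i(\boldsymbol{x})\,\boldsymbol{e}^T\boldsymbol{A}_i\,\boldsymbol{e}.
\end{equation}
The key algebraic step is that a scalar quadratic form only senses the symmetric part of its matrix, i.e. $\boldsymbol{e}^T\boldsymbol{A}_i\boldsymbol{e} = \tfrac{1}{2}\boldsymbol{e}^T(\boldsymbol{A}_i+\boldsymbol{A}_i^T)\boldsymbol{e}$. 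By the first hypothesis each $\boldsymbol{A}_i+\boldsymbol{A}_i^T$ is positive definite, so $\boldsymbol{e}^T\boldsymbol{A}_i\boldsymbol{e} > 0$ whenever $\boldsymbol{e}\neq 0$; combined with the second hypothesis $w_i(\boldsymbol{x}) > 0$, every summand is strictly positive off the equilibrium. Hence $\dot{V} < 0$ for all $\boldsymbol{x}_c\neq\boldsymbol{x}_c^*$, and Lyapunov's theorem yields global asymptotic stability to $\boldsymbol{x}_c^*$.

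The step that needs the most care is not the computation but the role of the non-controllable state $\boldsymbol{x}_{nc}$. Because the weights $w_i(\boldsymbol{x})$ depend on the \emph{full} state, the effective dynamics of $\boldsymbol{x}_c$ is the time-varying linear system $\dot{\boldsymbol{e}} = -\boldsymbol{A}(t)\boldsymbol{e}$ with $\boldsymbol{A}(t)=\sum_i w_i(\boldsymbol{x}(t))\boldsymbol{A}_i$, whose coefficients are driven by the uncontrolled (and possibly arbitrary) evolution of $\boldsymbol{x}_{nc}$. The argument above sidesteps this difficulty by using a single, state-independent quadratic $V$: since the hypotheses guarantee positivity of each quadratic form $\boldsymbol{e}^T\boldsymbol{A}_i\boldsymbol{e}$ uniformly in $\boldsymbol{x}$, the decrease of $V$ holds along every admissible trajectory regardless of how $\boldsymbol{x}_{nc}$ moves, so the common Lyapunov function certifies uniform asymptotic stability of the controllable error. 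I would also flag the mild convention point that the $\succ 0$ in Eq.~\ref{eq:precond} must be read as strict positive definiteness (rather than semi-definiteness) for the strict inequality $\dot V<0$, and hence asymptotic rather than merely marginal stability, to follow.
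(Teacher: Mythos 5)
Your proof is correct and is precisely the ``classical Lyapunov analysis'' that the paper's one-sentence proof defers to (citing~\cite{shavit2018learning} without spelling it out): the quadratic candidate $V=\tfrac{1}{2}\|\boldsymbol{e}\|^2$, the symmetrization identity $\boldsymbol{e}^T\boldsymbol{A}_i\boldsymbol{e}=\tfrac{1}{2}\boldsymbol{e}^T(\boldsymbol{A}_i+\boldsymbol{A}_i^T)\boldsymbol{e}$, and the positivity of the weights are exactly the intended argument, so this is the same approach, just written out. The one point to tighten is the non-autonomy issue you yourself flag: since the weights are driven by $\boldsymbol{x}_{nc}(t)$, pointwise strict negativity of $\dot V$ alone does not guarantee attractivity for a time-varying system --- you need $\dot V \le -W(\boldsymbol{e})$ for a time-independent positive-definite $W$, which holds here because $\sum_i w_i(\boldsymbol{x})=1$ (softmax output), giving $\dot V \le -\min_i\lambda_{\min}\bigl(\tfrac{1}{2}(\boldsymbol{A}_i+\boldsymbol{A}_i^T)\bigr)\|\boldsymbol{e}\|^2$, a bound that the hypothesis $w_i(\boldsymbol{x})>0$ by itself does not supply.
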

\begin{proof}
The proof follows classical Lyapunov analysis similar to~\cite{shavit2018learning}.
\end{proof}
The results of the above theorem can be described as \emph{``The controllable part of the system will always converge to the fixed attractor $\boldsymbol{x}_c^*$''}. Although this does not guarantee that the whole system state will converge to a desired state, this is an important property for a policy to have, as it will always generate commands that will eventually drive the controllable part of the system to a stable point. It is important to note that this property holds if the controllable system can follow the commanded velocities perfectly, and does not take into account the properties of a possible low-level controller (e.g., a controller based on the pseudoinverse of the Jacobian for end-effector control) or the rest of the environment. This is, however, common in the LfD literature since designing a policy that can guarantee the stability of the whole system and take into account the properties of the low-level controller is a challenging task and would require bulk approximations to be made~\cite{chatzilygeroudis2019survey,osa2018algorithmic}. Nevertheless, in all of our experiments, we never observed diverging motions and we did not have to tune the low-level controllers to avoid such situations. Overall, these limitations do not seem to have a big impact on the resulting behaviors and we were able to learn a wide range of motions using different low-level controllers (in joint- and task-space).
\vspace{-0.5em}

\subsection{ANDPs via Neural Networks}
The main intuition of ANDPs is to combine the power of neural networks to learn from data while keeping the stability guarantees of the traditional DS-based policies. In order to be able to represent the ANDPs (Eq.~\ref{eq:policy_dyn}) with a neural network, we have to: (a) find the ``learnable'' parameters, (b) make sure that the policy is fully differentiable, and (c) handle the constraints of Eq.~\ref{eq:precond} properly.

In order to define the learnable parameters, we need to identify parameters of Eq.~\ref{eq:policy_dyn}. First, the matrices $\boldsymbol{A}_i$ do not depend on the state, and thus we can directly optimize for their parameters. Second, in order to define each $w_i(\boldsymbol{x})$, we use one neural network for all of them. More concretely, we define a neural network $\Psi$ which takes input a full system state $\boldsymbol{x}$ and predicts a vector $\boldsymbol{W}\in\mathbb{R}^N$. In other words, $\boldsymbol{W} = \Psi(\boldsymbol{x}|\boldsymbol{\psi})$, where $\boldsymbol{\psi}$ are the parameters of the neural network. The $i\text{-th}$ element of the $\boldsymbol{W}$ vector represents $w_i(\boldsymbol{x})$. So, the total learnable parameters of the policy are $\boldsymbol{\theta} = \{\boldsymbol{A}_1,\dots,\boldsymbol{A}_N,\boldsymbol{\psi}\}$. It is easy to see that since each $\boldsymbol{A}_i$ is a simple matrix, and $\boldsymbol{\psi}$ parameters of a neural network, the whole policy is fully differentiable.

One can also add the attractor point, $\boldsymbol{x}^*_c$, to the optimization variables. The policy would still be differentiable ($\boldsymbol{x}^*_c$ is a free parameter). In our experiments, we observed that in IL/LfD scenarios, that would require a complicated loss function, and thus we left this exploration for future work. In this paper, we assume a fixed attractor that is equal to the average of the last points of all the demonstrated trajectories.

The last element is to be able to optimize the parameters $\boldsymbol{\theta}$ given some objective function, while respecting the constraints defined in Eq.~\ref{eq:precond}. In the general case, this would require a constrained optimization problem to be performed, but this can be challenging to do for high-dimensional parameter spaces, such as the ones generated by neural networks. In order to bypass this issue but still respect the constraints, we perform the following steps:
\begin{itemize}
    \setlength\itemsep{0em}
    \item First, each $w_i$ should be positive. We can easily generate positive numbers by adding an $exp$ layer after the last layer of $\Psi$. In this work, we always use a \emph{softmax} layer as the last layer of $\Psi$ that generates positive values that sum to one; it also makes more sense as we want to combine the elementary DSs that $\boldsymbol{A}_i$ define.
    \item We, now, need to satisfy the constraint $\boldsymbol{A}_i + \boldsymbol{A}_i^T \succ 0$. If we assume real matrices and define $\boldsymbol{A}_i = \boldsymbol{B}_i + \boldsymbol{C}_i - \boldsymbol{C}_i^T$, where $\boldsymbol{B}_i$ is a symmetric positive definite matrix (no restrictions for $\boldsymbol{C}_i$), we can see that $\boldsymbol{A}_i + \boldsymbol{A}_i^T = \boldsymbol{B}_i + \boldsymbol{C}_i - \boldsymbol{C}_i^T + \boldsymbol{B}_i + \boldsymbol{C}_i^T - \boldsymbol{C}_i = 2\boldsymbol{B}_i \succ 0$. $\boldsymbol{B}_i$ is symmetric, thus $\boldsymbol{B}_i = \boldsymbol{B}_i^T$, and $\boldsymbol{C}_i - \boldsymbol{C}_i^T$ defines a skew symmetric matrix. This gives us the ability to freely optimize for the parameters of $\boldsymbol{C}_i$. So, we are left with handling the case of optimizing for a symmetric positive definite matrix, $\boldsymbol{B}_i$.
    \item If we assume real matrices, a symmetric positive definite matrix $\boldsymbol{B}_i$ can be factorized as $\boldsymbol{B}_i = \boldsymbol{L}_i\boldsymbol{L}_i^T$ (Cholesky decomposition), where $\boldsymbol{L}_i$ is a lower triangular matrix with real and positive diagonal entries. It is easy to see that we can optimize for the parameters of $\boldsymbol{L}_i$ and reconstruct $\boldsymbol{B}_i$ that we need.
\end{itemize}
Using the above steps we can now perform unconstrained optimization while still ensuring that the constraints in Eq.~\ref{eq:precond} are fulfilled. This is important as we are more confident that the optimization will converge to good solutions.

\section{Experiments}

\begin{figure*}[!htb]
    \centerline{\includegraphics[width=\linewidth]{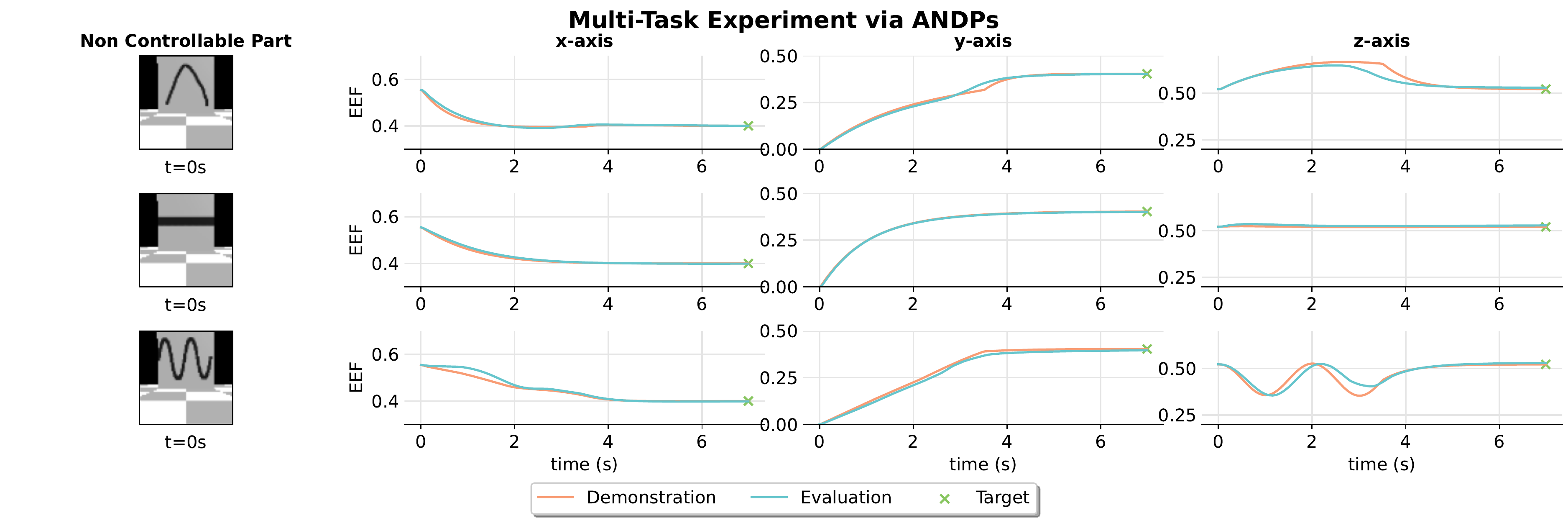}}
    \caption{Multi-task scenario with image inputs. All tasks are learned with a single model that can distinguish between tasks given an image input.}
    \label{fig:andps_images}
    \vspace{-2em}
\end{figure*}

In this paper, we focus on IL/LfD scenarios. Through the conducted experiments,
we attempt to answer the following questions:
\begin{enumerate}
    \setlength\itemsep{0em}
    \item Do ANDPs produce stable behaviors? How well can they reproduce the initial demonstrations?
    \item Can ANDPs learn complex movements for a realistic robotic task? Are they robust? 
    \item Can ANDPs accept arbitrary inputs like 3D orientations? Can they accept even raw images?
    \item Can ANDPs work on a physical robot?
\end{enumerate}

In order to answer the first three questions, we devise a multi-task scenario where the goal is for ANDPs to learn multiple tasks into one policy; we use the DART open-source simulator~\cite{lee2018dart}. The idea is to use the non-controllable part of the state $\boldsymbol{x}_{nc}$ to "define" which task we want the robot to perform. So for each task, $\boldsymbol{x}_{nc}$ is an image captured with a camera that is mounted to the robot's end-effector and points directly to a sign that displays the picture that corresponds to the particular movement. We also test the reactiveness of ANDPs by changing the image in the middle of the evaluation pipeline, and the robustness of the learned policies by inserting force perturbations.

In order to answer the fourth question, we devise the following experiment: \emph{We use a physical Franka Panda robot to collect three demonstrations with kinesthetic guidance for a pouring task and learn a policy with data collected from a physical setup.} In this task the robot needs to pour liquid from one cup into a bowl and we control the robot in end-effector space with changing orientation.

\subsection{Multi-Task Learning}
In this section, we want to determine whether ANDPs have the capability of learning intricate 3D motions, demonstrate the flexibility of ANDPs compared to traditional LfD methods, and exhibit the reactive and resilient  nature of the learned policy against perturbations. We collect one demonstration for each of the following movements: a sinusoidal motion, a linear motion, and  a curvilinear motion, so that we can devise a multi-task scenario where the goal is for ANDPS to learn multiple tasks into one policy. In essence, we use the non-controllable part of the state to "define" which task we want the robot to perform. In order to create the labels, we simulate a sign with the image corresponding to every motion across the robot, we then attach a camera to the end-effector of the arm facing towards the sign and we shoot a grayscale image for every sample. We take all 3 demonstrations and we have a state of the form: $\boldsymbol{x} = \{\boldsymbol{x}_{nc}, \boldsymbol{x}_c\} = \{\mathcal{I}, x, y\}$, where $\mathcal{I}\in\mathbb{R}^{64\times 64}$ is a grayscale image. We use a Convolutional Neural Network (CNN) to model the weight function $\Psi$; in particular, we use a LeNet~\cite{lecun1989backpropagation} variation. We learn one model for all three tasks. In Fig.~\ref{fig:andps_images}, we see that ANDPs are able to learn to distinguish the three tasks while always ensuring convergence to the fixed attractor. Moreover, the learned policy is reactive and robust to perturbations. To showcase the broader concept of reactiveness we start an evaluation run with the image corresponding to the linear movement displayed, at $t=1s$ we switch the displayed image  to the one representing the sinusoidal movement. We observe that the robot changed its motion to follow the shape of the corresponding movement (Fig.~\ref{fig:andps_reactive_1}). 

\begin{figure}[!h]
    \centerline{\includegraphics[width=\linewidth]{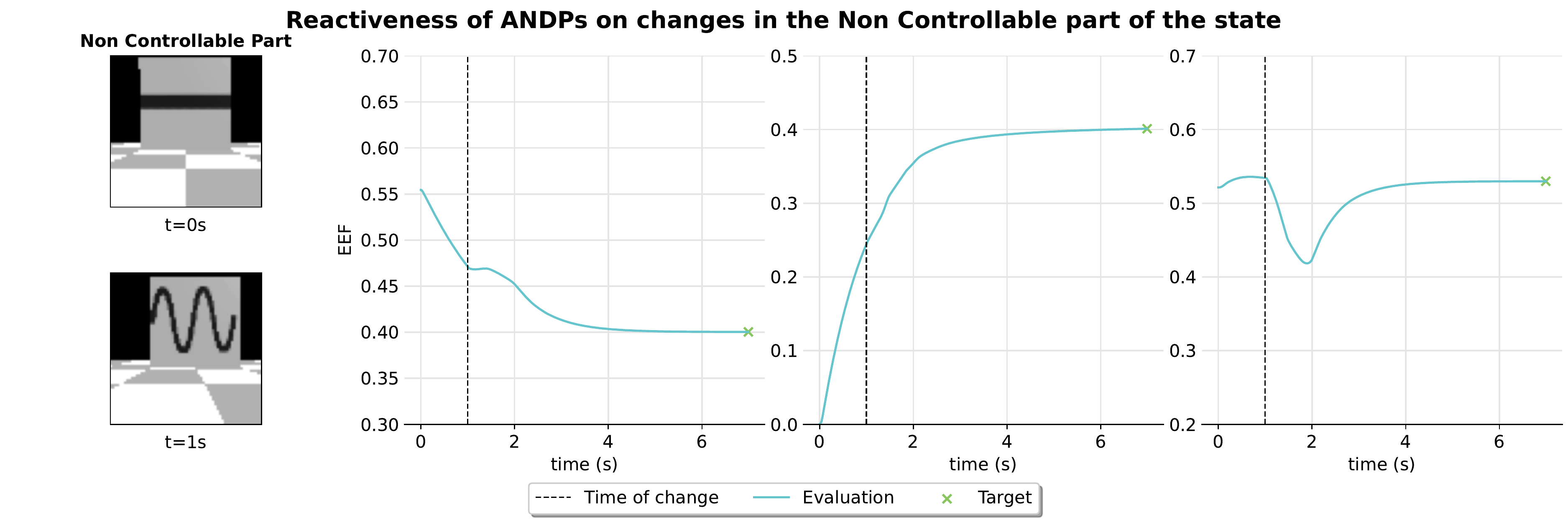}}
    \caption{Reactiveness of ANDPs on changes in the non-controllable part of the state, we switch from the line image to the sine image at t=$1s$.}
    \label{fig:andps_reactive_1}
    \vspace{-1em}
\end{figure}

To show that ANDPs are robust and reactive to force perturbations we apply an external force to the robot twice during the execution: once at the beginning of the behavior, and once at $t=5\,s$. We observe that the robot is able to converge to the attractor and follow the overall shape of the behavior (Fig.~\ref{fig:andps_reactive_2}). 

\begin{figure}[!h]
    \centerline{\includegraphics[width=\linewidth]{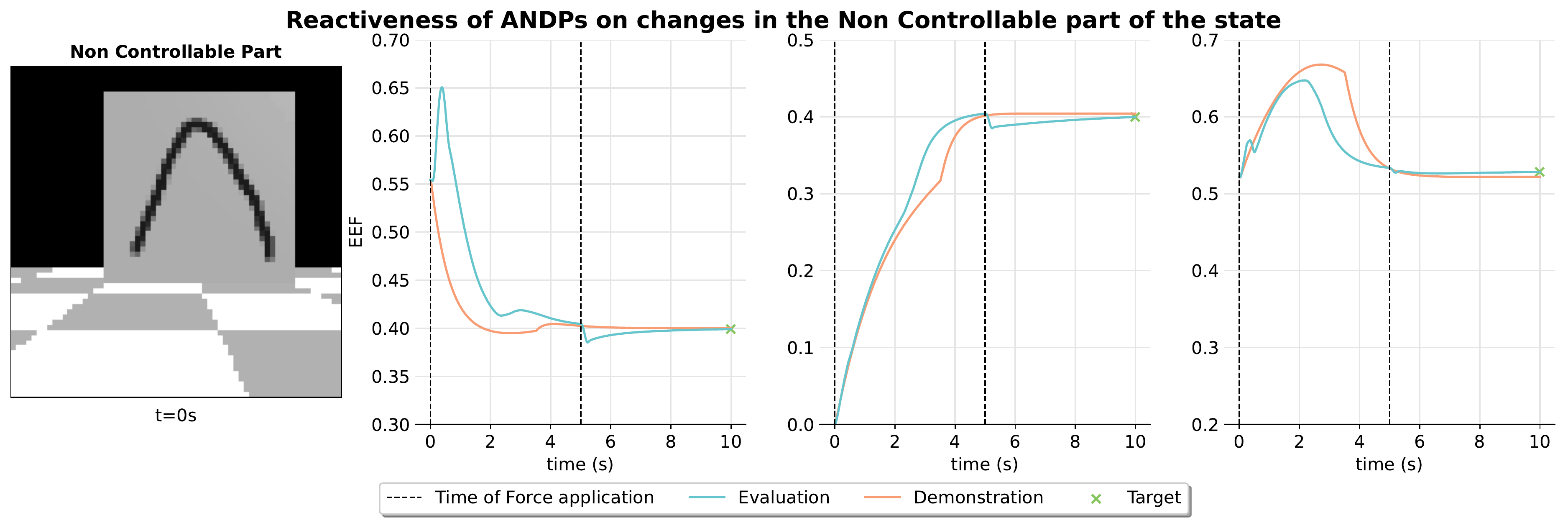}}
    \caption{Reactiveness of ANDPs on external force perturbations we apply an external force twice, one at $t=0s$ and $t=5s$.}
    \label{fig:andps_reactive_2}
    \vspace{-1em}
\end{figure}

\subsection{Physical Robot Experiment}
In this section, we want to identify whether ANDPs: (a) work with realistic demonstrations, and (b) can learn a task that requires precision and end-effector orientation control. For those reasons, \textbf{we collect three demonstrations with kinesthetic guidance on the physical robot performing a pouring task}: the robot holds a cup filled with liquid and needs to pour it inside a bowl (Fig.~\ref{fig:pour_demo}). For safety, we ``emulate'' the liquid with small plastic objects. We then learn a policy with ANDPs using the collected demonstrations with a state of the form $\boldsymbol{x}\equiv\boldsymbol{x}_c = \{x, y, z, r_x, r_y, r_z\}$, where $\{x,y,z\}$ is the end-effector translation and $\{r_x, r_y, r_z\}$ is the end-effector orientation expressed in Euler XYZ angles.

\begin{figure}[!h]
    \centerline{\includegraphics[width=\linewidth]{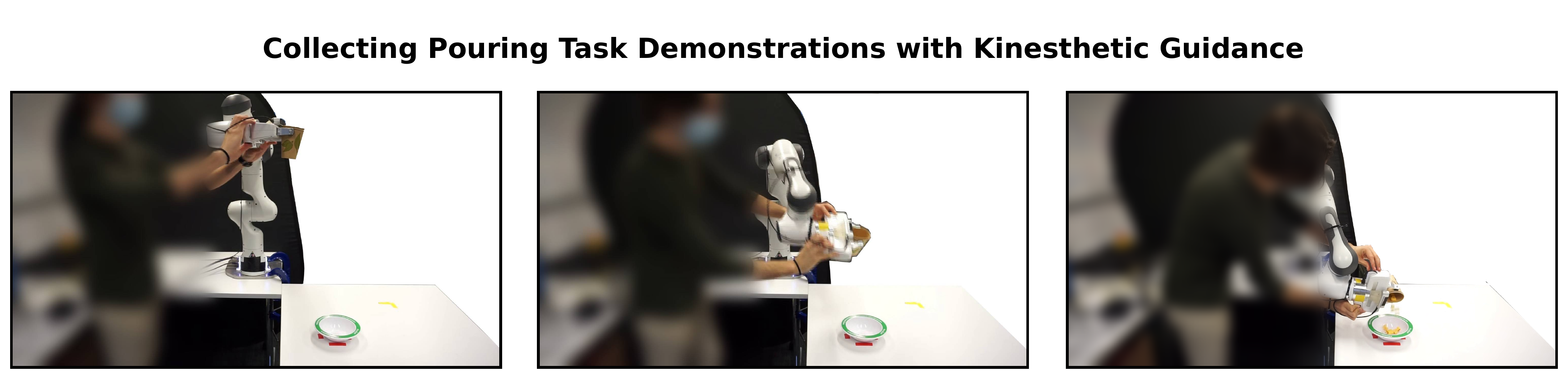}}
    \caption{ Collecting pouring task demonstrations via kinesthetic guidance on the Franka Emika Panda robot.}
    \label{fig:pour_demo}
    \vspace{-1em}
\end{figure}

The results showcase that ANDPs work reliably in this setting and the robot successfully pours the liquid from the cup to the bowl (Fig.~\ref{fig:pouring_task_real}). In order to validate more thoroughly the effectiveness of the learned policy, we perform \emph{10 replicates with different initial configurations} of the robot and measure \emph{the percentage of the plastic objects that end up inside the bowl} (Fig.~\ref{fig:results_pour} (a)).  We get a median percentage of $100\%$ with $67.5\%$ and $100\%$ for the 25-th and 75-th percentiles respectively (Fig.~\ref{fig:results_pour} (b)).


\begin{figure}[ht!]
    \includegraphics[width=\linewidth]{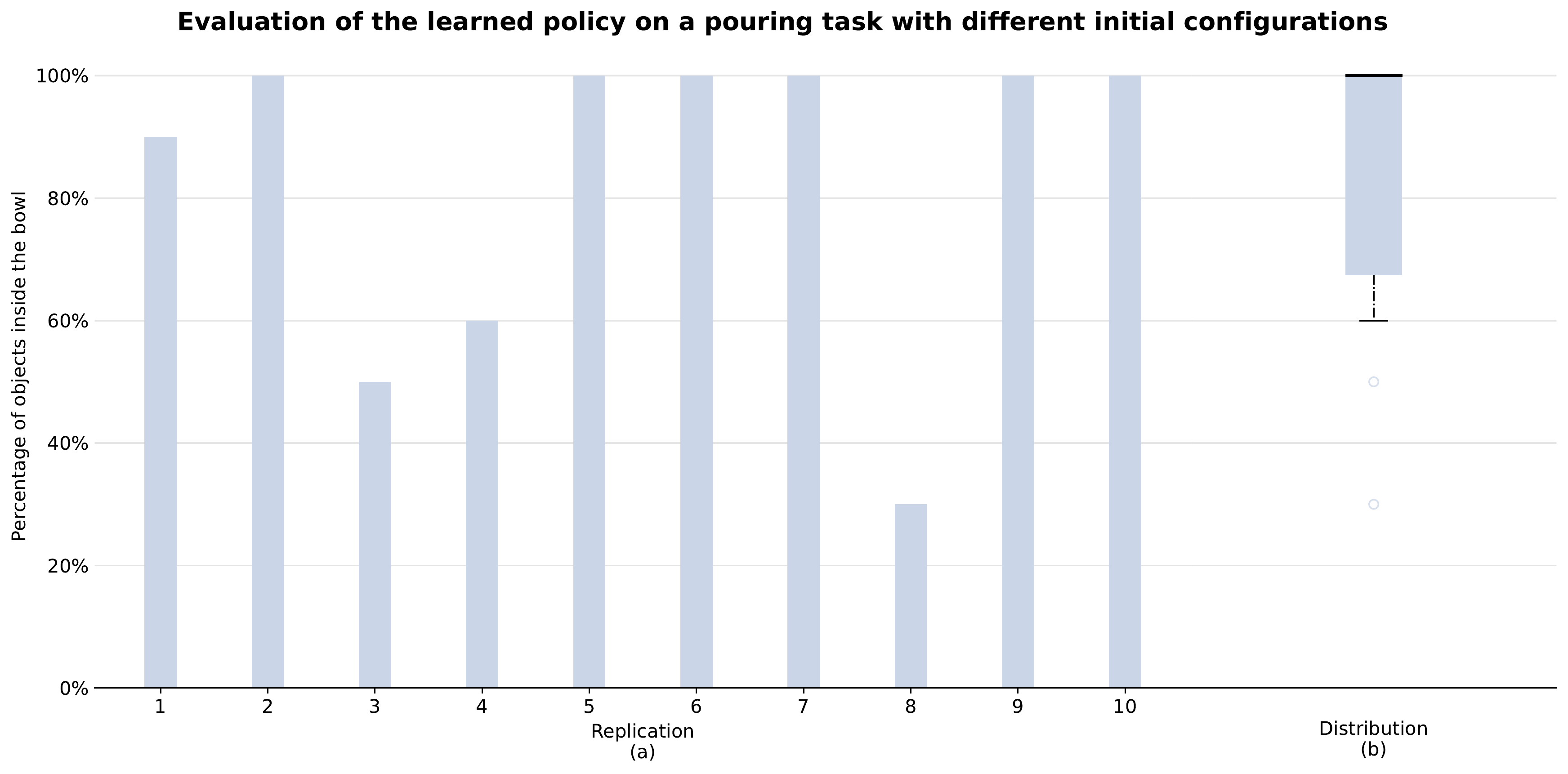}
    \caption{Percentage of objects that ended up inside the bowl on the 10 replications of the pouring task.}
    \label{fig:results_pour}
    \vspace{-1em}
\end{figure}
\begin{figure}[ht!]
    \centering
    \includegraphics[clip,trim=20 0 20 20,width=0.24\linewidth]{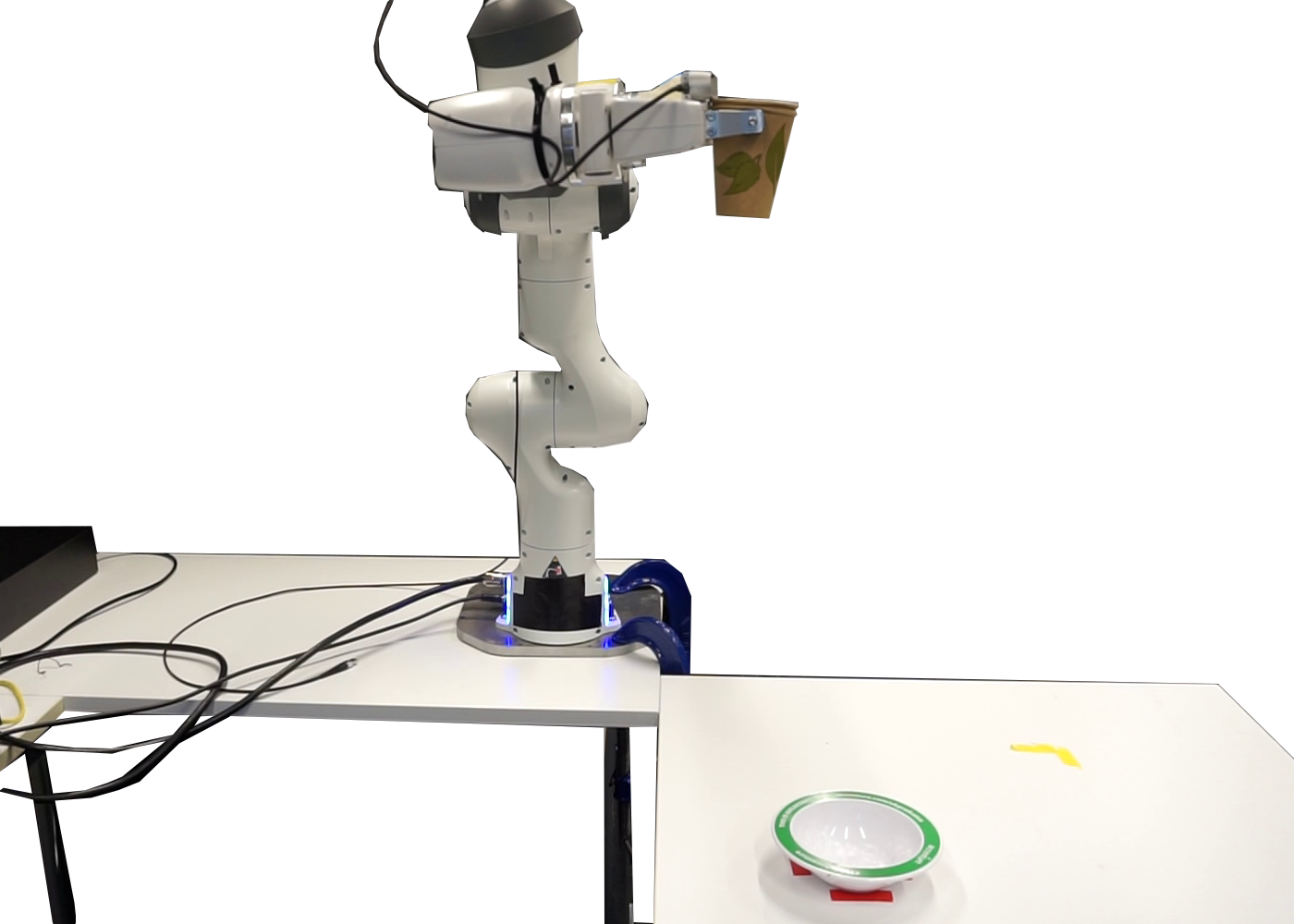}
    \includegraphics[clip,trim=20 0 20 20,width=0.24\linewidth]{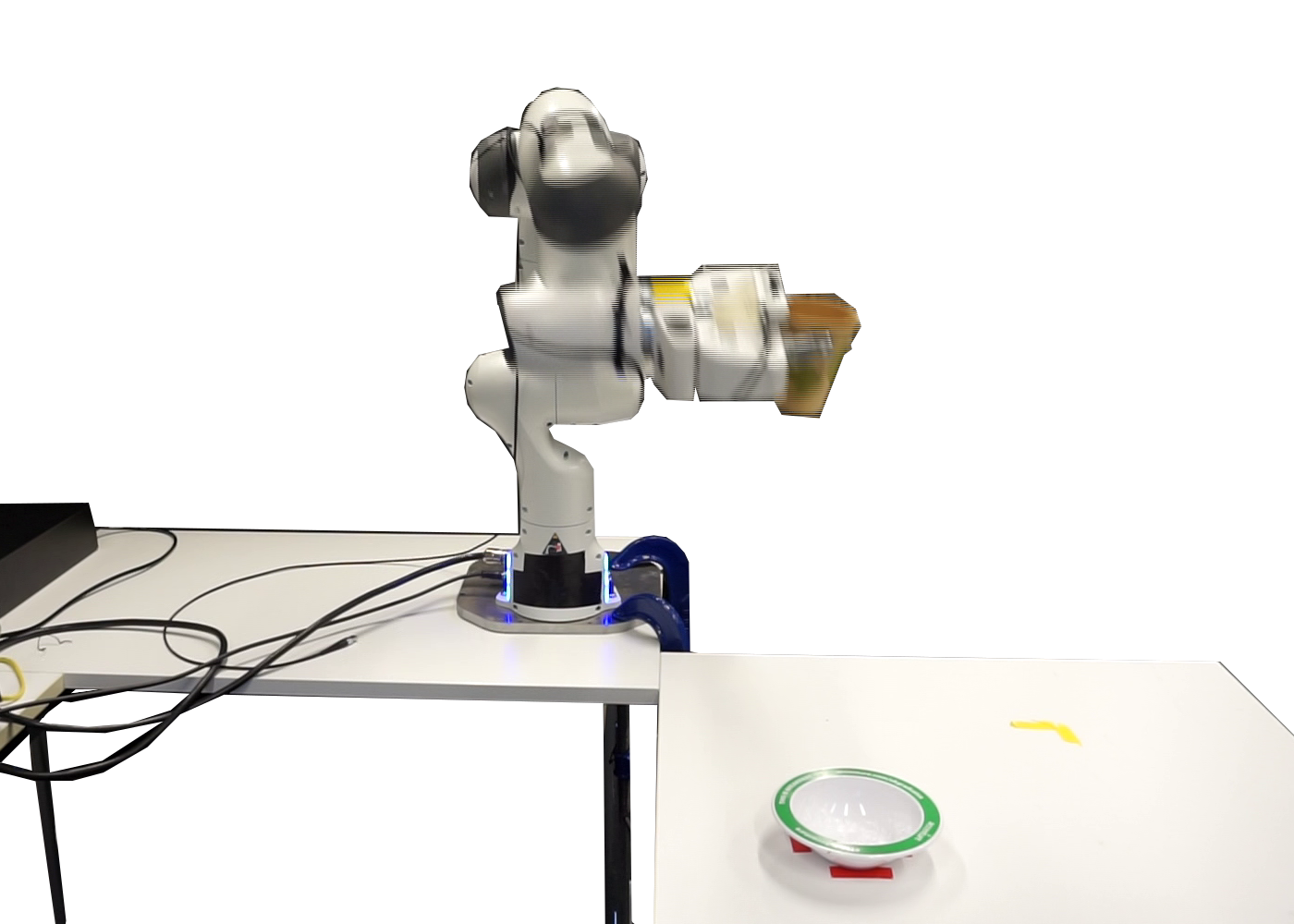}
    \includegraphics[clip,trim=20 0 20 20,width=0.24\linewidth]{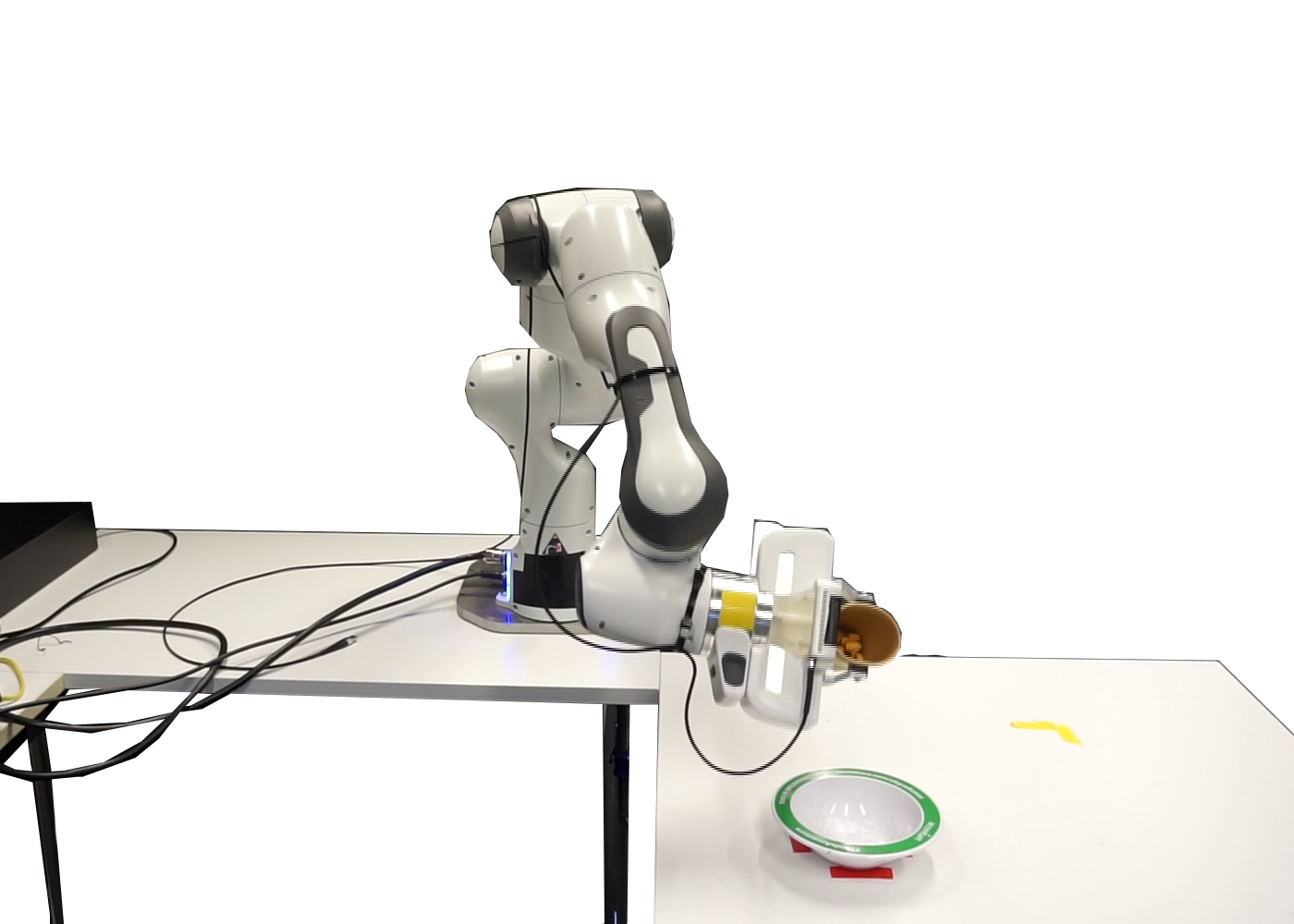}
    \includegraphics[clip,trim=20 0 20 20,width=0.24\linewidth]{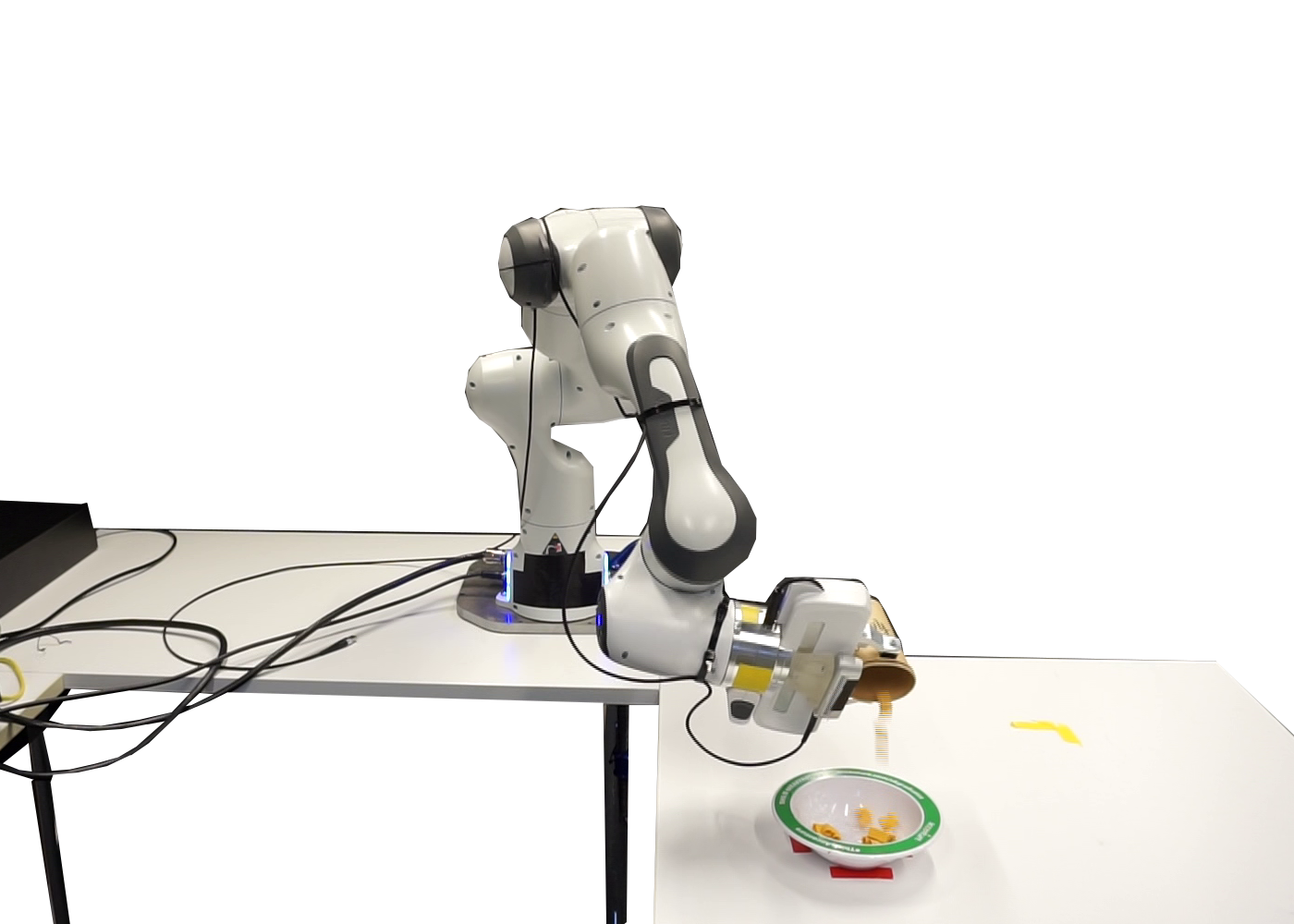}
    \caption{From left to right, screenshots of a successful trial of the pouring task in the physical setting.}
    \label{fig:pouring_task_real}
    \vspace{-1em}
\end{figure}

\section{Results and Conclusions}
ANDPs are one of the first policy structures for robot learning that are general purpose while ensuring asymptotic stability of the produced behaviors (at least for the controllable part, i.e., the robot). Using ANDPs we were able to learn many different tasks with different action space parameterizations and different input types. Although we performed experiments only in IL/LfD scenarios, ANDPs are fully differentiable and generic policies and thus can also be used in pure RL settings. We will explore this property of ANDPs in future work.

Another important feature of ANDPs is their inherit explainability. Since the underlying policy is a sum of elementary linear DSs, one can examine the ANDPs via more classical tools for understanding the reasoning of the policy behind its decisions. We aim at investigating this in detail in future work.

The main limitation of ANDPs' current formulation is the need for having a fixed attractor (even if the attractor is ``learned'', it is still a fixed attractor, that is, it does not move throughout the episode/experience). This has two important consequences: (a) it might be difficult for the policy to learn long-horizon complicated tasks, and (b) we need to find a more complicated optimization scheme in order to relax the constraints for the elementary DSs and allow non-monotonic motions (i.e., motions that ``go away'' from the attractor). In future work, we aim at exploring the possibility of having a moving attractor while keeping the stability properties.

Finally, although we provided interesting results with image inputs, we aim at performing extensive experiments with different tasks to further validate the effectiveness of ANDPs in these settings.
\section*{Acknowledgments}

Konstantinos Chatzilygeroudis was supported by the Hellenic Foundation for Research and Innovation (H.F.R.I.) under the ``3rd Call for H.F.R.I. Research Projects to support Post-Doctoral Researchers'' (Project Acronym: NOSALRO, Project Number: 7541). Dimitrios Kanoulas and Valerio Modugno were supported by the UKRI Future Leaders Fellowship [MR/V025333/1] (RoboHike).

\bibliographystyle{IEEEtran}
\bibliography{references}

\begin{thebibliography}{10}
\providecommand{\url}[1]{#1}
\csname url@samestyle\endcsname
\providecommand{\newblock}{\relax}
\providecommand{\bibinfo}[2]{#2}
\providecommand{\BIBentrySTDinterwordspacing}{\spaceskip=0pt\relax}
\providecommand{\BIBentryALTinterwordstretchfactor}{4}
\providecommand{\BIBentryALTinterwordspacing}{\spaceskip=\fontdimen2\font plus
\BIBentryALTinterwordstretchfactor\fontdimen3\font minus
  \fontdimen4\font\relax}
\providecommand{\BIBforeignlanguage}[2]{{%
\expandafter\ifx\csname l@#1\endcsname\relax
\typeout{** WARNING: IEEEtran.bst: No hyphenation pattern has been}%
\typeout{** loaded for the language `#1'. Using the pattern for}%
\typeout{** the default language instead.}%
\else
\language=\csname l@#1\endcsname
\fi
#2}}
\providecommand{\BIBdecl}{\relax}
\BIBdecl

\bibitem{varin2019comparison}
P.~Varin, L.~Grossman, and S.~Kuindersma, ``A comparison of action spaces for
  learning manipulation tasks,'' in \emph{2019 IEEE/RSJ International
  Conference on Intelligent Robots and Systems (IROS)}.\hskip 1em plus 0.5em
  minus 0.4em\relax IEEE, 2019, pp. 6015--6021.

\bibitem{chatzilygeroudis2019survey}
K.~Chatzilygeroudis, V.~Vassiliades, F.~Stulp, S.~Calinon, and J.-B. Mouret,
  ``A survey on policy search algorithms for learning robot controllers in a
  handful of trials,'' \emph{IEEE Transactions on Robotics}, 2019.

\bibitem{martin2019variable}
R.~Mart{\'\i}n-Mart{\'\i}n, M.~A. Lee, R.~Gardner, S.~Savarese, J.~Bohg, and
  A.~Garg, ``Variable impedance control in end-effector space: An action space
  for reinforcement learning in contact-rich tasks,'' in \emph{2019 IEEE/RSJ
  International Conference on Intelligent Robots and Systems (IROS)}.\hskip 1em
  plus 0.5em minus 0.4em\relax IEEE, 2019, pp. 1010--1017.

\bibitem{sutton1998reinforcement}
R.~S. Sutton and A.~G. Barto, \emph{Reinforcement learning: An
  introduction}.\hskip 1em plus 0.5em minus 0.4em\relax MIT press Cambridge,
  1998.

\bibitem{Rajeswaran2018RSS}
A.~Rajeswaran, V.~Kumar, A.~Gupta, G.~Vezzani, J.~Schulman, E.~Todorov, and
  S.~Levine, ``{Learning Complex Dexterous Manipulation with Deep Reinforcement
  Learning and Demonstrations},'' in \emph{Proceedings of Robotics: Science and
  Systems (RSS)}, 2018.

\bibitem{billard2008robot}
A.~Billard, S.~Calinon, R.~Dillmann, and S.~Schaal, ``Robot programming by
  demonstration,'' in \emph{Springer handbook of robotics}.\hskip 1em plus
  0.5em minus 0.4em\relax Springer, 2008, pp. 1371--1394.

\bibitem{khansari2011learning}
S.~M. Khansari-Zadeh and A.~Billard, ``Learning stable nonlinear dynamical
  systems with gaussian mixture models,'' \emph{IEEE Transactions on Robotics},
  vol.~27, no.~5, pp. 943--957, 2011.

\bibitem{stulp2013robot}
F.~Stulp and O.~Sigaud, ``{Robot skill learning: From reinforcement learning to
  evolution strategies},'' \emph{Paladyn, Journal of Behavioral Robotics},
  vol.~4, no.~1, pp. 49--61, 2013.

\bibitem{schaal2006dynamic}
S.~Schaal, ``Dynamic movement primitives-a framework for motor control in
  humans and humanoid robotics,'' in \emph{Adaptive motion of animals and
  machines}.\hskip 1em plus 0.5em minus 0.4em\relax Springer, 2006, pp.
  261--280.

\bibitem{ijspeert2013dynamical}
A.~J. Ijspeert, J.~Nakanishi, H.~Hoffmann, P.~Pastor, and S.~Schaal,
  ``Dynamical movement primitives: learning attractor models for motor
  behaviors,'' \emph{Neural computation}, vol.~25, no.~2, pp. 328--373, 2013.

\bibitem{khadivar2021learningdswithbigurcations}
F.~Khadivar, I.~Lauzana, and A.~Billard, ``Learning dynamical systems with
  bifurcations,'' \emph{Robotics and Autonomous Systems}, vol. 136, p. 103700,
  2021.

\bibitem{ude2010task}
A.~Ude, A.~Gams, T.~Asfour, and J.~Morimoto, ``Task-specific generalization of
  discrete and periodic dynamic movement primitives,'' \emph{IEEE Transactions
  on Robotics}, vol.~26, no.~5, pp. 800--815, 2010.

\bibitem{ude2016trajectory}
A.~Ude, B.~Nemec, J.~Morimoto \emph{et~al.}, ``Trajectory representation by
  nonlinear scaling of dynamic movement primitives,'' in \emph{IROS}, 2016.

\bibitem{ijspeert2003learning}
A.~J. Ijspeert, J.~Nakanishi, and S.~Schaal, ``Learning attractor landscapes
  for learning motor primitives,'' in \emph{NIPS}, 2003.

\bibitem{figueroa2018physically}
N.~Figueroa and A.~Billard, ``A physically-consistent bayesian non-parametric
  mixture model for dynamical system learning.'' in \emph{CoRL}, 2018, pp.
  927--946.

\bibitem{amanhoud2019dynamical}
W.~Amanhoud, M.~Khoramshahi, and A.~Billard, ``A dynamical system approach to
  motion and force generation in contact tasks.''\hskip 1em plus 0.5em minus
  0.4em\relax Robotics: Science and Systems (RSS), 2019.

\bibitem{bahl2020neural}
S.~Bahl, M.~Mukadam, A.~Gupta, and D.~Pathak, ``Neural dynamic policies for
  end-to-end sensorimotor learning,'' in \emph{NeurIPS}, 2020.

\bibitem{bahl2021hierarchical}
S.~Bahl, A.~Gupta, and D.~Pathak, ``Hierarchical neural dynamic policies,'' in
  \emph{RSS}, 2021.

\bibitem{fidelman2004learning}
P.~Fidelman and P.~Stone, ``Learning ball acquisition on a physical robot,'' in
  \emph{2004 International Symposium on Robotics and Automation (ISRA)}, 2004,
  p.~6.

\bibitem{stulp13learning}
F.~Stulp, G.~Raiola \emph{et~al.}, ``{Learning Compact Parameterized Skills
  with a Single Regression},'' in \emph{Humanoids}, 2013.

\bibitem{guenter2007reinforcement}
F.~Guenter, M.~Hersch, S.~Calinon, and A.~Billard, ``Reinforcement learning for
  imitating constrained reaching movements,'' \emph{Advanced Robotics},
  vol.~21, pp. 1521--1544, 2007.

\bibitem{shavit2018learning}
Y.~Shavit, N.~Figueroa, S.~S.~M. Salehian, and A.~Billard, ``Learning augmented
  joint-space task-oriented dynamical systems: a linear parameter varying and
  synergetic control approach,'' \emph{IEEE Robotics and Automation Letters},
  vol.~3, no.~3, pp. 2718--2725, 2018.

\bibitem{figueroa2020dynamical}
N.~Figueroa, S.~Faraji, M.~Koptev, and A.~Billard, ``A dynamical system
  approach for adaptive grasping, navigation and co-manipulation with humanoid
  robots,'' in \emph{2020 IEEE International Conference on Robotics and
  Automation (ICRA)}.\hskip 1em plus 0.5em minus 0.4em\relax IEEE, 2020, pp.
  7676--7682.

\bibitem{osa2018algorithmic}
T.~Osa, J.~Pajarinen, G.~Neumann, J.~A. Bagnell, P.~Abbeel, J.~Peters
  \emph{et~al.}, ``An algorithmic perspective on imitation learning,''
  \emph{Foundations and Trends{\textregistered} in Robotics}, vol.~7, no. 1-2,
  pp. 1--179, 2018.

\bibitem{lee2018dart}
J.~Lee, M.~X. Grey, S.~Ha, T.~Kunz, S.~Jain, Y.~Ye, S.~S. Srinivasa,
  M.~Stilman, and C.~K. Liu, ``Dart: Dynamic animation and robotics toolkit,''
  \emph{Journal of Open Source Software}, vol.~3, no.~22, p. 500, 2018.

\bibitem{lecun1989backpropagation}
Y.~LeCun, B.~Boser, J.~S. Denker, D.~Henderson, R.~E. Howard, W.~Hubbard, and
  L.~D. Jackel, ``Backpropagation applied to handwritten zip code
  recognition,'' \emph{Neural computation}, vol.~1, no.~4, pp. 541--551, 1989.

\end{thebibliography}

\end{document}